\documentclass[confcompsoc]{IEEEtran}
\IEEEoverridecommandlockouts
 
\usepackage{array}
\usepackage{subfigure}
\usepackage{stfloats}

\usepackage[
pdfa,
hidelinks,
pdftex, 
pdfdisplaydoctitle,
pdfpagelabels,
pdfauthor={Christopher Morris, Nils M. Kriege, Kristian Kersting, Petra Mutzel},
pdftitle={Faster Kernels for Graphs with Continuous Attributes via Hashing},  
pdfsubject={},
pdfkeywords={Data Mining, Big Data, Machine Learning, Graph Kernel, Locality Sensitive Hashing, ICDM},
pdfproducer={Latex with the hyperref package},
pdfcreator={pdflatex}
]{hyperref}

\usepackage{cmap}
\usepackage[T1]{fontenc}
\usepackage[utf8]{inputenc}

\usepackage{exscale}
\usepackage[l2tabu, orthodox]{nag}
\usepackage{etex}
\reserveinserts{28}	

\usepackage{amsmath}
\interdisplaylinepenalty=2500
\usepackage{amssymb}
\usepackage{amsthm}
\usepackage{amsfonts}
\usepackage{thmtools}	

\usepackage[activate={true,nocompatibility},final,kerning=true,spacing=true,stretch=0, shrink=58]{microtype}

\usepackage[mathic=true]{mathtools}
\usepackage{fixmath}
\usepackage{complexity}
\usepackage{bbm}
\usepackage{nicefrac}
\usepackage{mleftright}

\newtheorem{definition}{Definition}

\theoremstyle{plain}
\newtheorem{theorem}{Theorem}
\newtheorem{proposition}{Proposition}

\newtheorem{corollary}{Corollary}

\usepackage{todonotes}
\usepackage[capitalise]{cleveref}   

\usepackage{todonotes}
\usepackage{algorithm}
\usepackage{algorithmic}

\usepackage[backend=bibtex,citestyle=numeric-comp,style=ieee,natbib=false,doi=false,isbn=false,url=false]{biblatex} 
\addbibresource{bibliography.bib}

\usepackage{enumitem}
\setlist[enumerate]{noitemsep, topsep=0.3\topsep}
\setlist[description]{noitemsep, topsep=0.3\topsep}
\setlist[itemize]{noitemsep, topsep=0.3\topsep}
\usepackage{booktabs}
\usepackage{multirow}
\newcommand{\ra}[1]{\renewcommand{\arraystretch}{#1}}

\newcommand{\new}[1]{\emph{#1}}

\newcommand{\bbR}{\ensuremath{\mathbb{R}}}

\newcommand{\bbN}{\ensuremath{\mathbb{N}}}

\newcommand{\tp}[0]{\top}

\newcommand{\hashgraph}{\ensuremath{\boldsymbol{h}}}

\newcommand{\Comment}[1]{\hfill$\triangleright$ \textit{#1}}

\usepackage{tikz}
\usepackage{textcomp}
\usepackage{hyperref}
\usepackage{lipsum}

\newcommand\copyrighttext{%
  \footnotesize \textcopyright 2016 IEEE. Personal use of this material is permitted.
  Permission from IEEE must be obtained for all other uses, in any current or future
  media, including reprinting/republishing this material for advertising or promotional
  purposes, creating new collective works, for resale or redistribution to servers or
  lists, or reuse of any copyrighted component of this work in other works.}
\newcommand\copyrightnotice{%
\begin{tikzpicture}[remember picture,overlay]
\node[anchor=south,yshift=10pt] at (current page.south) {\fbox{\parbox{\dimexpr\textwidth-\fboxsep-\fboxrule\relax}{\copyrighttext}}};
\end{tikzpicture}%
}

\begin{document}
\title{Faster Kernels for Graphs with Continuous Attributes via Hashing}
\author{\IEEEauthorblockN{Christopher Morris, \and Nils M.\,Kriege, \and Kristian Kersting, \and Petra Mutzel\\}
\IEEEauthorblockA{TU Dortmund University \\ \{christopher.morris, nils.kriege, kristian.kersting, petra.mutzel\}@tu-dortmund.de}
}

\maketitle
\copyrightnotice
\begin{abstract}
While state-of-the-art kernels for graphs with discrete labels scale well to graphs with thousands of nodes, the few existing kernels for graphs with continuous attributes, unfortunately, do not scale well. To overcome this limitation, we present \emph{hash graph kernels}, a general framework to derive kernels for graphs with continuous attributes from discrete ones. The idea is to iteratively turn continuous attributes into discrete labels using randomized hash functions. We illustrate hash graph kernels for the Weisfeiler-Lehman subtree kernel and for the shortest-path kernel. The resulting
novel graph kernels are shown to be, both, able to handle graphs with continuous 
attributes and scalable to large graphs and data sets. 
This is supported by our theoretical analysis and demonstrated by an extensive 
experimental evaluation.
\end{abstract}
\IEEEpeerreviewmaketitle

\section{Introduction}
In several domains like chemo- and bioinformatics as well as social network and 
image analysis structured objects appear naturally.
Graph kernels are a key concept for the application of kernel methods to structured
data and various approaches have been developed in recent years, see~\cites{Vis+2010}{Neu+2015},
 and references therein. The considered graphs can be distinguished in 
(i) graphs with discrete labels, e.g., molecular graphs, where nodes are 
annotated by the symbols of the atoms they represent, and 
(ii) attributed graphs with (multi-dimensional) real-valued labels in addition 
to discrete labels.
Attributed graphs often appear in domains like bioinformatics~\cite{Borgwardt2005a}
or image classification~\cite{Harchaoui2007}, where attributes may represent 
physical properties of protein secondary structure elements or RGB values of 
colors, respectively.
Taking the continuous information into account has been proven empirically to be beneficial 
in several applications, e.g., see~\cite{Neu+2015,Borgwardt2005a, Fer+2013, Harchaoui2007, Kri+2012,  Ors+2015}.

Kernels are equivalent to the inner product in an associated feature space, where
a feature map assigns the objects of the input space to a feature vector.
The various graph kernels proposed in recent years can be divided into approaches
that either compute feature maps (i) explicitly, or (ii) implicitly~\cite{Kri+2014}.
Explicit computation schemes have been shown to be scalable and allow the use of 
fast linear support vector classifiers, e.g.,~\cite{Joa+2006}, while implicit computation schemes are often slow. 
Alternatively, we may divide graph kernels according to their ability to handle 
annotations of nodes and edges. The proposed graph kernels are either
(i) restricted to discrete labels, or (ii) compare annotations like continuous
values by user-specified kernels. Typically kernels of the first category  
implicitly compare annotations of nodes and edges by the Dirac kernel,
which requires values to match exactly and is not adequate for continuous values.
The two classifications of graph kernels mentioned above largely 
coincide:
Graph kernels supporting complex annotations use implicit computation schemes 
and do not scale well. Whereas graphs with discrete labels can be compared 
efficiently by graph kernels based on explicit feature maps. 
This is what we make use of to develop a unifying treatment. But first, let 
us touch upon related work. 

\subsection{Previous work}
In recent years, various graph kernels have been proposed. In~\cite{Gaertner2003} and~\cite{Kashima2003} graph kernels were proposed
based on random walks, which count the number of walks two graphs have in 
common. Since then, random walk kernels have been studied intensively, e.g.,~\cite{Kang2012,Kri+2014,Mahe2004,Vis+2010}.
Kernels based on tree patterns were initially proposed in~\cite{Ramon2003}. These two approaches were originally applied
to graphs with discrete labels, but the method of implicit computation supports 
comparing attributes by user-specified kernel functions. 
Kernels based on shortest paths~\cite{Bor+2005} are computed by performing $1$-step walks on the transformed input graphs, where
edges are annotated with shortest-path lengths.
A drawback of the approaches mentioned above is their high computational cost. 

A different line in the development of graph kernels focused particularly on 
scalable graph kernels. These kernels are typically computed efficiently by 
explicit feature maps, but are severely limited to graphs with discrete labels.
Prominent examples are kernels based on subgraphs up to a fixed 
size, e.g.,~\cite{She+2009}, or specific subgraphs like cycles and 
trees~\cite{Horvath2004}. Other approaches of this category encode the 
neighborhood of every node by different techniques~\cite{Hido2009,Neu+2015,She+2011}.

Recently, several kernels specifically designed for gra\-phs with continuous
attributes were proposed~\cite{Fer+2013,Kri+2012,Ors+2015}, and their 
experimental evaluation confirms the importance of handling continuous attributes
adequately. 

Several articles on scalable kernels for graphs with discrete labels propose the 
adaption of their approach to graphs with continuous attributes as future 
work, e.g., see~\cite{She+2009,She+2011}. 
Yet, only little work in this direction has been reported, which is most likely 
due to the fact that this in general is a non-trivial task.
An immediate approach is to discretize continuous values by binning. A key problem 
of this method is that two values, which only differ marginally may still fall 
in different bins and are then considered non-matching. 
Still, promising experimental results of such approaches have been reported
for certain data sets, e.g.,~\cite{Neu+2015}.

\subsection{Our Contribution}
We introduce \emph{hash graph kernels} for graphs with continuous attributes.
This family of kernels is obtained by a generic method, which iteratively hashes
continuous attributes to discrete labels in order to apply a base kernel for 
graphs with discrete labels. This allows to construct a single combined feature vector for a graph from the individual feature vectors of each iteration.
The essence of this approach is:
	\begin{quote}\it
	 The hash graph kernel framework lifts every graph kernel that supports discrete labels to a kernel which can handle continuous attributes.
	 \end{quote}
We exemplify this for two established graph kernels:
\begin{itemize}
\item We obtain a variation of the Weisfeiler-Lehman subtree kernel, which implicitly employs a non-trivial kernel on the node and edge annotations 
and is suitable for continuous values.
\item Moreover, we derive a variant of the \emph{shortest-path kernel} which also supports continuous attributes while being efficiently computable
by explicit feature maps. 
\end{itemize}
For both kernels we provide a detailed \emph{theoretical analysis}. Moreover, the effectiveness of these kernels is demonstrated in an extensive experimental study
on real-world and synthetic data sets. The results show that hash graph kernels are orders of magnitude
faster than state-of-the-art kernels for attributed graphs without drop in 
classification accuracy.

\section{Notation}\label{pre}
An \new{(undirected) graph} $G$ is a pair $(V,E)$ with a \emph{finite} set of \new{nodes} $V$ and a set of \new{edges} $E \subseteq \{ \{u,v\} \subseteq V \mid u \neq v \}$. We denote the set of nodes and the set of edges of $G$ by $V(G)$ and $E(G)$, respectively. For ease of notation we denote the edge $\{u,v\}$ in $E(G)$ by $(u,v)$ or $(v,u)$. Moreover, $N(v)$ denotes the \new{neighborhood} of $v$ in $V(G)$, i.e., $N(v) = \{ v' \in V(G) \mid (v, v') \in E(G) \}$. An \new{attributed graph} is a graph $G$ endowed with an \new{attribute function} $a \colon V(G) \to \bbR^d$ for $d \geq 1$. We say that $a(v)$ is an \new{attribute} of $v$ for $v$ in $V(G)$. A \emph{labeled graph} is an attributed graph with an attribute function $l$, where the codomain of $l$ is restricted to a (finite) alphabet, e.g., a finite subset of the natural numbers. Analogously, we say that $l(v)$ is a \new{label} of $v$ in $V(G)$.

Let $\chi$ be a non-empty set and let 
$k\colon \chi \times \chi \to \mathbb{R}$ 
be a function. Then $k$ is a \new{kernel} on $\chi$ if there is a real Hilbert space $\mathcal{H}_k$ and a mapping $\phi \colon \chi \to \mathcal{H}_k$ such that $k(x,y) = \langle \phi(x), \phi(y) \rangle$ for $x$ and $y$ in $\chi$, where $\langle \cdot, \cdot \rangle$ denotes the inner product of $\mathcal{H}_k$. We call $\phi$ a \new{feature map}, and $\mathcal{H}_k$ a \new{feature space}. Let $\mathcal{G}$ be a non-empty set of (attributed) graphs, then a kernel $k\colon \mathcal{G} \times \mathcal{G} \to \bbR$ is called \new{graph kernel}.  
We denote by $k_\delta \colon \chi \times \chi \to \bbR$ the Dirac kernel with $k_\delta(x,y)=1$ if $x=y$, and $0$ otherwise.

\section{Hash Graph Kernels}\label{shgk}

In this section we introduce hash graph kernels. The main idea of hash graph kernels is to map attributes to labels using a \new{family of hash functions} and then apply a kernel for graphs with discrete labels.

Let $\mathcal{H} = \{h : \bbR^d \to \bbN \}$ be a family of hash functions and $G$ a 
graph with attribute function $a \colon V(G) \to \bbR^d$.
We can transform $(G,a)$ to a graph with discrete labels by mapping each 
attribute $a(v)$ to $h(a(v))$ with some function $h$ in $\mathcal{H}$. For short, we write 
$\hashgraph(G)$ for the labeled graph obtained by this procedure. The function $h$ is drawn at random from the family of hash functions 
$\mathcal{H}$. This procedure is repeated multiple times in order to lower the variance.
Thus, we obtain a sequence of discretely labeled graphs $(\hashgraph_i(G))_{i=1}^I$, where
$I$ is the number of iterations.
Hash graph kernels compare these sequences of labeled graphs by an arbitrary 
graph kernel for labeled graphs, which we refer to as \emph{discrete base graph 
	kernel}, e.g., the Weisfeiler-Lehman subtree or the shortest-path kernel.

\begin{definition}[Hash graph kernel]\label{hgk}
	Let $\mathcal{H}$ be a family of hash functions and $k_\text{b}$ a discrete base graph 
	kernel, then the \emph{hash graph kernel} for two attributed graphs $G$ and 
	$H$ is defined as
	\begin{equation*}
	k_\text{HGK}(G,H) = \frac{1}{I} \sum_{i=1}^I k_\text{b} ( \hashgraph_i(G), \hashgraph_i(H) ), 
	\end{equation*}
	where  $\hashgraph_i$ is obtained by choosing hash functions from $\mathcal{H}$.
\end{definition}

We will discuss hash functions, possible ways to choose them from $\mathcal{H}$ and 
how they relate to the global kernel value in~\cref{hash} and~\cref{hgk-app}. We proceed with
the algorithmic aspects of hash graph kernels.
It is desirable for efficiency to compute explicit feature maps for graph kernels.
We can obtain feature vectors for hash graph kernels under the assumption that 
the discrete base graph kernel can be computed by explicit feature maps.
This can be achieved by concatenating the feature vectors for each iteration and normalizing the combined feature maps by $\sqrt{1/I}$ according to the pseudocode in~\cref{rbk}. 

\begin{algorithm}
	\begin{algorithmic}[1]
		\STATE \textbf{Input:} An attributed graph $(G,a)$, a graph feature map $\phi_b$ of the discrete base kernel $k_b$, and a parameter $I$ in $\bbN_{>0}$.
		\STATE \textbf{Output: } A feature vector $\Phi(G)$ for $(G,a)$.
		\item[]

		\FOR{$i$ in $\{1, \dots, I\}$}
		\STATE $(G,l) \gets \hashgraph_i(G)$  \Comment{Hash attributes to labels}
		\STATE $\Phi(G) \leftarrow \Phi(G) \oplus \phi_b((G,l))$ \Comment{Concatenate vectors}
		\ENDFOR

		\item[]
		\STATE  \textbf{Return }$ \sqrt{\nicefrac{1}{I}} \cdot \Phi(G)$ \Comment{Normalize}
	\end{algorithmic}
	\caption{Explicit feature maps for hash graph kernels}\label{rbk}
\end{algorithm}
\subsection{Analysis}

Since hash graph kernels are a normalized sum over discrete base graph kernels
applied to a sequence of transformed input graphs, it is clear that we again 
obtain a valid kernel.

For the explicit computation of feature maps by \cref{rbk} we get the following 
bound on the running time.

\begin{proposition}[Running Time]
  \cref{rbk} computes the hash graph kernel feature map for a graph $G$ in time 
$
	\mathcal{O} (I \cdot (\mathsf{T}_{\text{H}}(G) + \mathsf{T}_{\phi}(G))),
$
where $\mathsf{T}_{\text{H}}(G)$ denotes the running time to evaluate the 
   hash functions for $G$ and $\mathsf{T}_{\phi}(G)$ the running time to compute 
   the graph feature map of the discrete base graph kernel for $G$. 
\end{proposition}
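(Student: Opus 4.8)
The plan is to trace through the pseudocode of \cref{rbk} and bound each operation, since the algorithm consists of a single loop over $I$ iterations followed by one post-processing step. First I would note that the body of the \textbf{for} loop is executed exactly $I$ times, so it suffices to bound the cost of a single iteration together with the final normalization.

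Fix an iteration. Computing the labeled graph $\hashgraph_i(G)$ amounts to evaluating the chosen hash functions on every attribute of $G$, which by definition takes time $\mathsf{T}_{\text{H}}(G)$. The subsequent step computes the base feature vector $\phi_b((G,l))$ in time $\mathsf{T}_{\phi}(G)$ and appends it to the accumulator $\Phi(G)$. The only mildly non-obvious point is that the dimension of $\phi_b((G,l))$ is at most $\mathsf{T}_{\phi}(G)$, since its entries must be written down as part of computing it; hence the concatenation costs $\mathcal{O}(\mathsf{T}_{\phi}(G))$ and is absorbed into the feature-map cost. Thus one iteration runs in $\mathcal{O}(\mathsf{T}_{\text{H}}(G) + \mathsf{T}_{\phi}(G))$ and the whole loop in $\mathcal{O}(I \cdot (\mathsf{T}_{\text{H}}(G) + \mathsf{T}_{\phi}(G)))$.

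Finally, I would account for the normalization step, which scales $\Phi(G)$ by $\sqrt{\nicefrac{1}{I}}$. The dimension of $\Phi(G)$ is the sum of the dimensions of the $I$ concatenated vectors and thus at most $I \cdot \mathsf{T}_{\phi}(G)$, so this step costs $\mathcal{O}(I \cdot \mathsf{T}_{\phi}(G))$, which is dominated by the loop bound. Adding the two contributions gives the claimed running time $\mathcal{O}(I \cdot (\mathsf{T}_{\text{H}}(G) + \mathsf{T}_{\phi}(G)))$. There is no genuine obstacle here; the single point that deserves a word of care is the implicit assumption that the output size of $\phi_b$ is bounded by its computation time, which holds in any standard model of computation.
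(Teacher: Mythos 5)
Your proof is correct and follows the same route as the paper, which simply states that the bound follows directly from inspecting the algorithm; you have merely spelled out the per-iteration accounting and the (dominated) cost of the final normalization. No issues.
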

\begin{proof}
	Directly follows from~\cref{rbk}.
\end{proof}
Notice that when we fix the number of iterations and assume $\mathsf{T}_{\text{H}}(G) \leq \mathsf{T}_{\phi}(G)$, the hash graph kernel can be computed in the same asymptotic running time as the discrete base graph kernel. Moreover, notice that lines 4 to 5 in \cref{rbk} can be easily executed in parallel.

\subsection{Hash Functions}\label{hash}

In this section we discuss possible realizations of the hashing technique used
to obtain hash graph kernels according to~\cref{hgk}. 

The key idea is to choose a family of hash functions and draw hash functions $h_1$ 
and $h_2$ in each iteration such that $\Pr[h_1(x) = h_2(y)]$ is an adequate 
measure of similarity between attributes $x$ and $y$ in $\bbR^d$.
For the case that $h_1=h_2$ drawn at random, such families of hash functions 
have been proposed, e.g., see \cite{Rah+2008, Andoni2008, Dat+2004}.
Unfortunately, these results do not lift to kernels composed of 
products of base kernels. Thus they do not directly transfer to hash graph kernels,
where complex discrete base graph kernels are employed. 
For example, let $k_\Delta$ be the hat kernel on $\bbR$ and $h$ a hash function,
such that $k_\Delta(x,y) = \Pr[h(x) = h(y)]$, see~\cite{Rah+2008}. However, in general
$$k_\Delta(a,c) \cdot k_\Delta(b,d) \neq \Pr\left[h(a) = h(c) \wedge h(b) = h(d)\right]\;.$$

To overcome this issue, we introduce the following concept.

\begin{definition}\label{ikh_fam}
	Let $k \colon \chi \times \chi \to \bbR$ be a kernel and let $\mathcal{H} = \{ h \colon \chi \to \mathcal{S} \}$ for some set $\mathcal{S}$ be a family of hash functions. Then $\mathcal{H}$ is an \emph{independent $k$-hash family} if $\Pr[h_1(x) = h_2(y)] = k(x,y)$ where $h_1$ and $h_2$ are chosen independently and uniformly at random from $\mathcal{H}$.
\end{definition}

\section{Hash Graph Kernel Instances}\label{hgk-app}

In the following we prove that hash graph kernels approximate implicit variants of the shortest-path and the Weisfeiler-Lehman subtree kernel for attributed graphs.

\subsection{Shortest-path kernel}

We first describe the implicit shortest-path kernel which can handle attributes. Let $(G,a)$ be an attributed graph and let $d_{uv}$ denote the length of the shortest path between $u$ and $v$ in $V(G)$. The kernel is then defined as 
\begin{equation*}
k^{k_{\text{A}},k_{\text{d}}}_{\text{Imp-SP}}(G,H) = \sum_{\substack{(u,v) \in V(G)^2\\ u \neq v}} \sum_{\substack{(w,z) \in V(H)^2\\ w \neq z}} k((u,v),(w,z)),
\end{equation*}
where
\begin{align*}\label{ksp}
k((u,v),(w,z)) = &k_{\text{A}}(a(u), a(w)) \cdot k_{\text{A}}(a(v), a(z))\\
&\cdot k_{\text{d}}(d_{uv}, d_{wz})\;.
\end{align*}
Here $k_{\text{A}}$ is a kernel for comparing node labels or attributes and $k_{\text{d}}$ is a kernel to compare shortest-path distances, such that $k_{\text{d}}(d_{uv}, d_{wz}) = 0$ if  $d_{uv} = \infty$ or $d_{wz} = \infty$.

If we set $k_{\text{A}}$ and $k_{\text{d}}$ to the Dirac kernel, we can compute an explicit mapping $\phi_{\text{SP}}$ for the kernel $k^{k_{\text{A}},k_{\text{d}}}_{\text{Imp-SP}}$: Assume a labeled graph $(G,l)$, then each component of $\phi_{\text{SP}}(G)$ counts the number of occurrences of a triple of the form $(l(u),l(v), d_{uv})$ for $(u,v)$ in $V(G)^2$, $u \neq v$, and  $d_{uv} < \infty$. It is easy to see that 
\begin{equation}\label{speq}
\phi_{\text{SP}}(G)^\tp \phi_{\text{SP}}(H) = k^{k_\delta, k_\delta}_{\text{Imp-SP}}(G,H)\;.
\end{equation}

The following theorem shows that the hash graph kernel approximates $k^{k_{\text{A}},k_\delta}_{\text{Imp-SP}}$ arbitrarily close by using the explicit shortest-path kernel as a discrete base kernel and an independent $k_{\text{A}}$-hash family.

\begin{theorem}[Approximation of implicit shortest-path kernel for continuous attributes]\label{spa}
	Let $k_{\text{A}} \colon \bbR^n \times \bbR^n \to \bbR$ be a kernel and let $\mathcal{H}$ be an independent $k_{\text{A}}$-hash family. Assume that in each iteration of~\cref{rbk} each attribute is mapped to a label using a hash function chosen independently and uniformly at random from $\mathcal{H}$. Then~\cref{rbk} with the explicit shortest-path kernel acting as the discrete base kernel approximates $k^{k_{\text{A}},k_\delta}_{\text{Imp-SP}}$ such that 
	\begin{equation*}
	\Pr\mleft[\mleft| \Phi(G)^\tp \Phi(H) - k^{k_{\text{A}},k_\delta}_{\text{Imp-SP}}(G,H) \mright| \geq \lambda \mright] \leq 2\exp \mleft(-2\lambda^2 I \mright)\;.
	\end{equation*}
	Moreover with any constant probability,
	\begin{equation*}
	\sup_{G,H \in \mathcal{G}} \left| \Phi(G)^\tp \Phi(H) - k^{k_{\text{A}},k_\delta}_{\text{Imp-SP}}(G,H) \right|\leq \epsilon,
	\end{equation*}
	for $\epsilon >0$.
\end{theorem}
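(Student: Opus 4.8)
The plan is to express $\Phi(G)^\tp \Phi(H)$ as an empirical mean of $I$ independent, identically distributed, $[0,1]$-valued random variables whose common expectation equals $k^{k_{\text{A}},k_\delta}_{\text{Imp-SP}}(G,H)$, then apply Hoeffding's inequality for the first bound and a union bound over $\mathcal{G}$ for the second.

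First, by~\cref{rbk} the computed feature map is $\Phi(G) = \sqrt{\nicefrac{1}{I}}\bigl(\phi_{\text{SP}}(\hashgraph_1(G)) \oplus \cdots \oplus \phi_{\text{SP}}(\hashgraph_I(G))\bigr)$, so by~\eqref{speq}
\begin{equation*}
\Phi(G)^\tp \Phi(H) = \frac{1}{I}\sum_{i=1}^I X_i, \qquad X_i := k^{k_\delta,k_\delta}_{\text{Imp-SP}}\bigl(\hashgraph_i(G),\hashgraph_i(H)\bigr)\;.
\end{equation*}
Since a fresh family of hash functions is drawn in each iteration, the $X_i$ are i.i.d.; and, the base feature maps being normalized, each $X_i$ lies in $[0,1]$.

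Second, I would compute $\mathbb{E}[X_i]$. Unfolding the definition of $\phi_{\text{SP}}$, $X_i$ equals the double sum over $(u,v) \in V(G)^2$ with $u \neq v$ and $(w,z) \in V(H)^2$ with $w \neq z$ of $k_\delta(h_u(a(u)),h_w(a(w)))\, k_\delta(h_v(a(v)),h_z(a(z)))\, k_\delta(d_{uv},d_{wz})$, where $h_u,h_v,h_w,h_z$ are the hash functions assigned to these four (distinct) nodes in iteration $i$, each drawn independently and uniformly from $\mathcal{H}$. The distance factor is deterministic; the two remaining factors depend on the disjoint, hence independent, hash-function pairs $(h_u,h_w)$ and $(h_v,h_z)$, so the expectation of their product factorizes; and because $\mathcal{H}$ is an independent $k_{\text{A}}$-hash family, $\mathbb{E}[k_\delta(h_u(a(u)),h_w(a(w)))] = \Pr[h_u(a(u)) = h_w(a(w))] = k_{\text{A}}(a(u),a(w))$, and likewise for the other factor. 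Re-assembling the sum gives $\mathbb{E}[X_i] = k^{k_{\text{A}},k_\delta}_{\text{Imp-SP}}(G,H)$.

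It then remains to put the pieces together: Hoeffding's inequality applied to the average of the i.i.d. variables $X_i \in [0,1]$ gives $\Pr[|\Phi(G)^\tp\Phi(H) - k^{k_{\text{A}},k_\delta}_{\text{Imp-SP}}(G,H)| \geq \lambda] \leq 2\exp(-2\lambda^2 I)$, and a union bound over the at most $|\mathcal{G}|^2$ pairs of graphs of the (finite) set $\mathcal{G}$ yields $\Pr[\sup_{G,H \in \mathcal{G}}|\Phi(G)^\tp\Phi(H) - k^{k_{\text{A}},k_\delta}_{\text{Imp-SP}}(G,H)| \geq \epsilon] \leq 2|\mathcal{G}|^2\exp(-2\epsilon^2 I)$, which falls below any prescribed constant once $I = \Omega(\epsilon^{-2}\log|\mathcal{G}|)$. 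I expect the delicate step to be the expectation computation, specifically the factorization of $\mathbb{E}[k_\delta(h_u(a(u)),h_w(a(w)))\, k_\delta(h_v(a(v)),h_z(a(z)))]$ into $k_{\text{A}}(a(u),a(w)) \cdot k_{\text{A}}(a(v),a(z))$: this requires the hash functions of the four involved nodes to be mutually independent, which is exactly what~\cref{ikh_fam} together with the independent per-attribute draws provides, and which a single shared hash function per iteration would fail to provide, as the remark preceding~\cref{ikh_fam} illustrates.
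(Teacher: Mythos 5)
Your proposal is correct and follows essentially the same route as the paper's proof: linearity of expectation combined with the independent $k_{\text{A}}$-hash family property to show $\mathbf{E}[\Phi(G)^\tp\Phi(H)] = k^{k_{\text{A}},k_\delta}_{\text{Imp-SP}}(G,H)$, then Hoeffding's inequality on the $[0,1]$-normalized per-iteration values and a union bound over the $|\mathcal{G}|^2$ pairs with $I = \Omega(\epsilon^{-2}\log|\mathcal{G}|)$. Your unfolding of the expectation of a single summand --- factorizing $\mathbf{E}[k_\delta(h_u(a(u)),h_w(a(w)))\,k_\delta(h_v(a(v)),h_z(a(z)))]$ via the mutual independence of the four per-node hash draws --- is in fact slightly more explicit than the paper's corresponding step, but it is the same argument.
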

\begin{proof}
	By assumption, we have $\Pr[h_1(a) = h_2(a')] = k_{\text{A}}(a,a')$
	for $h_1$ and $h_2$ chosen independently and uniformly at random from $\mathcal{H}$. 
	Since we are using a Dirac kernel to compare discrete attributes, we get $\mathbf{E}[k_{\delta}(h_1(a), h_2(a'))]= k_{\text{A}}(a,a')$.
	Since $\mathcal{H}$ is an independent $k_{\text{A}}$-hash family, $\Pr[h_1(a) = h_2(b) \wedge h_3(c) = h_4(d)] = k_{\text{A}}(a,b) \cdot k_{\text{A}}(c,d)$ for $h_1, h_2, h_3$, and $h_4$ chosen independently and uniformly at random from $\mathcal{H}$. Hence, by the above, $\mathbf{E}[k_{\delta}(h_1(a), h_2(b)) \cdot k_{\delta}(h_3(c), h_4(d))]= k_{\text{A}}(a,b) \cdot  k_{\text{A}}(c,d).$
	By~\cref{speq} and using the linearity of expectation,
$
	\mathbf{E}[ \Phi(G)^\tp \Phi(H)] = k_{\text{Imp-SP}}(G,H).
$
	Now assume that $k_{\text{Imp-SP}}(G,H)$ is normalized to $[0,1]$, then the first claim follows from the Hoeffding bound~\cite{Hoe1963}. In order to derive the second claim, we choose
$
	I \geq \frac{1}{2\epsilon^2} \log( |\mathcal{G}|^2\cdot c)
$
	with a large enough constant $c > 1$, where $I$ is the number of iterations in~\cref{rbk}. From the first claim, we get
	\begin{align*}
	&\Pr\mleft[\mleft| \Phi(G)^\tp \Phi(H)-\!k^{k_{\text{A}},k_\delta}_{\text{Imp-SP}}(G,H) \mright| > \epsilon \mright]\\
	&\leq 2\exp \mleft(- \log( |\mathcal{G}|^2\cdot c) \mright)\\
	& = \frac{1}{c/2 \cdot |\mathcal{G}|^2}\;.
	\end{align*}
	The claim then follows from the Union bound.
\end{proof}

Notice that we can also approximate $k^{k_{\text{A}},k_{\text{d}}}_{\text{Imp-SP}}$ by employing a $k_{\text{d}}$-independent hash family.

\subsection{Weisfeiler-Lehman subtree kernel}

By the same arguments, we can derive a similar result for the Weisfeiler-Lehman subtree kernel. The following Proposition derives an implicit version of the Weisfeiler-Lehman subtree kernel.

\begin{proposition}[Implicit Weisfeiler-Lehman subtree kernel] Let 
	\begin{align*}
	k^h_{\text{Imp-WL}}(G,H) = \displaystyle\sum_{i=0}^{h}\sum_{{\substack{v \in V(G),\\ \,v' \in V(H)}}} k_i(v,v'),  
	\end{align*}
	\text{where} 
	\begin{align*}
	&k_i(v,v') = 
	\begin{dcases}
	k_{\delta}(l(v),l(v')) &\!\!i = 0,\\
	k_{i-1}(v,v') \cdot f(v,v') &\!\!i > 0 \wedge \mathcal{M}_i(v, v') \neq \emptyset,\\
	0 &\!\!i > 0 \wedge \mathcal{M}(v,v') = \emptyset,
	\end{dcases}\\
		\end{align*}
\text{and}\\
	\begin{align*}
	&f(v,v') = {|M_i(v,v')|}^{-1}  \sum_{{R \in M_i(v,v')}}  \prod_{{(w,w') \in R}} k_{i-1}(w,w'),
\end{align*}
	where $\mathcal{M}_i(v,v')$ is the set of bijections $b\colon V(G) \to V(H)$ between $N(v)$ and 
	$N(v')$ such that $k_{i-1}(w,w') > 0$ for $b(w)=w'$. Then $k^h_{\text{Imp-WL}}$ is equivalent to the Weisfeiler-Lehman subtree kernel.
\end{proposition}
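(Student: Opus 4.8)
The plan is to reduce the statement to a single pointwise identity. Writing $l_i$ for the node label obtained after $i$ rounds of Weisfeiler-Lehman colour refinement — so $l_0 = l$ and $l_i(v)$ is a compressed encoding of the pair $\bigl(l_{i-1}(v), \{\!\{ l_{i-1}(w) : w \in N(v)\}\!\}\bigr)$ — I would show that for every $i \in \{0,\dots,h\}$ and all $v \in V(G)$, $v' \in V(H)$ one has
$k_i(v,v') = k_\delta(l_i(v), l_i(v'))$.
Granting this, summing over $i$ and over all node pairs gives $k^h_{\text{Imp-WL}}(G,H) = \sum_{i=0}^{h} \sum_{v,v'} k_\delta(l_i(v),l_i(v')) = \sum_{i=0}^{h} \langle \phi^{(i)}(G), \phi^{(i)}(H)\rangle$, where $\phi^{(i)}(G)$ is the histogram of refined colours after $i$ iterations; this is exactly the feature-space definition of the Weisfeiler-Lehman subtree kernel~\cite{She+2011}, since $\langle \phi^{(i)}(G),\phi^{(i)}(H)\rangle$ counts the pairs of nodes sharing the same colour after $i$ refinement steps.

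I would prove the pointwise identity by induction on $i$. The base case $i = 0$ is immediate, since $k_0(v,v') = k_\delta(l(v),l(v'))$ and $l_0 = l$. For the inductive step assume that $k_{i-1}$ is $\{0,1\}$-valued with $k_{i-1}(w,w') = 1$ precisely when $l_{i-1}(w) = l_{i-1}(w')$. The crucial observation is that whenever $\mathcal{M}_i(v,v') \neq \emptyset$ the factor $f(v,v')$ equals $1$: by definition every admissible bijection $R \in \mathcal{M}_i(v,v')$ (interpreted as the set of bijections $N(v) \to N(v')$) satisfies $k_{i-1}(w,w') > 0$, hence $k_{i-1}(w,w') = 1$ by the hypothesis, for every $(w,w') \in R$, so each product in the sum defining $f$ equals $1$ and $f(v,v') = |\mathcal{M}_i(v,v')|^{-1}\cdot|\mathcal{M}_i(v,v')| = 1$. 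Together with the defining cases of $k_i$, this shows $k_i(v,v') = k_{i-1}(v,v')$ when $\mathcal{M}_i(v,v') \neq \emptyset$ and $k_i(v,v') = 0$ otherwise; in particular $k_i$ remains $\{0,1\}$-valued, and $k_i(v,v') = 1$ iff $l_{i-1}(v) = l_{i-1}(v')$ and $\mathcal{M}_i(v,v') \neq \emptyset$.

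It then remains to match the condition $\mathcal{M}_i(v,v') \neq \emptyset$ with the multiset equality $\{\!\{l_{i-1}(w):w\in N(v)\}\!\} = \{\!\{l_{i-1}(w'):w'\in N(v')\}\!\}$. Using the hypothesis once more, an admissible bijection $b : N(v) \to N(v')$ is exactly a perfect matching in the bipartite graph on $N(v) \cup N(v')$ that joins $w$ to $w'$ iff $l_{i-1}(w) = l_{i-1}(w')$; such a matching exists iff the two neighbour-colour multisets coincide — this forces $|N(v)| = |N(v')|$, the "only if" direction follows from Hall's condition applied colour class by colour class, and "if" is obtained by pairing up equal colours in an arbitrary order. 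Combining this with the previous step, $k_i(v,v') = 1$ iff $l_{i-1}(v) = l_{i-1}(v')$ and the neighbour multisets agree, which is precisely the defining condition for $l_i(v) = l_i(v')$; hence $k_i(v,v') = k_\delta(l_i(v),l_i(v'))$ and the induction closes.

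I expect the only genuinely delicate point to be the bookkeeping in the last step — translating the "the set of admissible bijections between neighbourhoods is nonempty" formulation used in the proposition into the standard multiset-of-colours formulation of Weisfeiler-Lehman refinement — together with stating the inductive hypothesis sharply enough (namely that each $k_{i-1}$ takes values in $\{0,1\}$) for the average of products defining $f$ to collapse to $1$. Everything else is routine; in contrast with~\cref{spa}, no probabilistic or analytic ingredient enters here.
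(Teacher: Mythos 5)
Your proof is correct, but it takes a genuinely different (and far more explicit) route than the paper: the paper's entire proof is the single line ``Follows from~\cite[Theorem 8]{She+2011}'', deferring the whole argument to the cited equivalence between the Weisfeiler--Lehman subtree kernel and a recursively defined pairwise node kernel. What you have done is essentially reconstruct the proof of that cited theorem from scratch: the induction establishing $k_i(v,v') = k_\delta(l_i(v),l_i(v'))$, the observation that the averaged product $f(v,v')$ collapses to $1$ once $k_{i-1}$ is known to be $\{0,1\}$-valued, and the translation between ``$\mathcal{M}_i(v,v')\neq\emptyset$'' and equality of the neighbour-label multisets via perfect matchings in a disjoint union of complete bipartite colour blocks. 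All three steps are sound (including your implicit, and correct, reading of the statement's notational slips $M_i$ vs.\ $\mathcal{M}_i$ vs.\ $\mathcal{M}$ as all denoting the same set), and the final summation $\sum_{i}\sum_{v,v'} k_\delta(l_i(v),l_i(v')) = \sum_i \langle \phi^{(i)}(G),\phi^{(i)}(H)\rangle$ is exactly the feature-map form of the WL subtree kernel. What your approach buys is a self-contained verification that does not lean on the external reference and that makes visible \emph{why} the averaged-product formulation degenerates to the Dirac case --- which is precisely the structural fact the subsequent corollary exploits when $k_\delta$ is replaced by $k_{\text{A}}$; what the paper's approach buys is brevity. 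No gap.
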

\begin{proof}
	Follows from~\cite[Theorem 8]{She+2011}. 
\end{proof} 
We show that~\cref{rbk} with the \emph{(explicit)} Weisfeiler-Lehman subtree kernel acting as the discrete base graph kernel probabilistically approximates the graph kernel $k^{h,k_\text{A}}_{\text{Imp-WL}}$, where $k^{h,k_{\text{A}}}_{\text{Imp-WL}}$ is defined by substituting $k_{\delta}$ in the definition of $k^h_{\text{Imp-WL}}(G,H)$ by a kernel $k_{\text{A}} \colon \bbR^n \times \bbR^n \to \bbR$.

\begin{corollary}[Approximation of implicit Weisfeiler-Leh\-man subtree kernel for continous attributes]
	Let $k_{\text{A}} \colon \bbR^n \times \bbR^n \to \bbR$ be a kernel and let $\mathcal{H}$ be an independent $k_{\text{A}}$-hash family. Assume that in each iteration of~\cref{rbk} each attribute is mapped to a label using a hash function chosen independently and uniformly at random from $\mathcal{H}$. Then~\cref{rbk} with the Weisfeiler-Lehman subtree kernel with $h$ iterations acting as the discrete base kernel approximates $k^{h, k_\text{A}}_{\text{Imp-WL}}$ such that 
	\begin{equation*}
	\Pr\mleft[\mleft| \Phi(G)^\tp \Phi(H) -  k^{h,k_\text{A}}_{\text{Imp-WL}}(G,H) \mright| \geq \lambda \mright] \leq 2\exp \mleft(-2\lambda^2 I \mright).
	\end{equation*}
	Moreover with any constant probability,
	\begin{equation*}
	\sup_{G,H \in \mathcal{G}} \left| \Phi(G)^\tp \Phi(H) - k^{h,k_\text{A}}_{\text{Imp-WL}}(G,H) \right|\leq \epsilon,
	\end{equation*}
	for $\epsilon$ > 0.
\end{corollary}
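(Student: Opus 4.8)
The plan is to replay the proof of \cref{spa} almost line for line, substituting the Weisfeiler--Lehman feature map for the shortest-path one and the recursive structure of $k^{h}_{\text{Imp-WL}}$ for the triple-counting argument. First I would use the preceding Proposition on the implicit Weisfeiler--Lehman subtree kernel (i.e.\ \cite[Theorem 8]{She+2011}): for any discretely labeled graphs it gives $\phi_{\text{WL}}(G)^\tp\phi_{\text{WL}}(H) = k^{h}_{\text{Imp-WL}}(G,H)$, with $k_0 = k_\delta$ on the labels. Consequently, if $l_i$ denotes the labelling produced in iteration $i$ of \cref{rbk}, the algorithm's output satisfies $\Phi(G)^\tp\Phi(H) = \tfrac1I\sum_{i=1}^{I} k^{h}_{\text{Imp-WL}}(\hashgraph_i(G),\hashgraph_i(H))$, i.e.\ a normalized sum of $I$ copies of the random variable $X := k^{h}_{\text{Imp-WL}}(\hashgraph(G),\hashgraph(H))$ which are independent across iterations (each iteration draws fresh hash functions); as in \cref{spa} normalize the base kernel so that $X \in [0,1]$.

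Next I would compute $\mathbf{E}[X]$ and show it equals $k^{h,k_{\text{A}}}_{\text{Imp-WL}}(G,H)$. Since $k^{h}_{\text{Imp-WL}} = \sum_{i=0}^{h}\sum_{v,v'} k_i(v,v')$, by linearity of expectation it suffices to prove $\mathbf{E}[k_i(v,v')] = k_i^{k_{\text{A}}}(v,v')$ for every pair $(v,v')$ and every $i$, where $k_i^{k_{\text{A}}}$ is the recursion with $k_{\text{A}}$ in place of $k_\delta$ on the attributes. I would do this by induction on $i$. For $i=0$: since each node $u$ is hashed by an independently and uniformly drawn $h_u\in\mathcal{H}$, \cref{ikh_fam} gives $\mathbf{E}[k_0(v,v')] = \Pr[h_v(a(v)) = h_{v'}(a(v'))] = k_{\text{A}}(a(v),a(v')) = k_0^{k_{\text{A}}}(v,v')$. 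For $i>0$ use $k_i(v,v') = k_{i-1}(v,v')\cdot f(v,v')$, expand $f$ over bijections of the two neighborhoods, and invoke the independent $k_{\text{A}}$-hash family property in the multi-factor form $\mathbf{E}\bigl[\prod_j k_\delta(h_{u_j}(a(u_j)), h_{u'_j}(a(u'_j)))\bigr] = \prod_j k_{\text{A}}(a(u_j),a(u'_j))$ for pairwise distinct nodes --- the generalisation of the two-factor identity used in \cref{spa}, valid because distinct nodes carry independent hash functions. Matching this level by level against the definition of $f$ yields the claim, hence $\mathbf{E}[\Phi(G)^\tp\Phi(H)] = k^{h,k_{\text{A}}}_{\text{Imp-WL}}(G,H)$.

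The remaining two steps are identical to \cref{spa}. Since $X\in[0,1]$ and $\Phi(G)^\tp\Phi(H)$ is the mean of $I$ independent copies of $X$, Hoeffding's inequality~\cite{Hoe1963} gives $\Pr[|\Phi(G)^\tp\Phi(H) - k^{h,k_{\text{A}}}_{\text{Imp-WL}}(G,H)| \ge \lambda] \le 2\exp(-2\lambda^2 I)$, the first claim. For the uniform bound, choose $I \ge \tfrac{1}{2\epsilon^2}\log(|\mathcal{G}|^2 c)$ with a large enough constant $c>1$, apply the first claim with $\lambda = \epsilon$ to each of the at most $|\mathcal{G}|^2$ pairs, and conclude by a union bound that $\sup_{G,H\in\mathcal{G}}|\Phi(G)^\tp\Phi(H) - k^{h,k_{\text{A}}}_{\text{Imp-WL}}(G,H)| \le \epsilon$ holds with probability at least $1 - 2/c$, i.e.\ with any constant probability.

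The main obstacle is the expectation step. Unlike the shortest-path feature map, whose inner product is a sum of products of at most two Dirac indicators over disjoint pairs of nodes, the WL color of $v$ after $i$ rounds depends on the hashed labels of the entire depth-$i$ neighborhood of $v$, and a single node's hash can enter $k_i(v,v')$ through several nested terms of the recursion, so the expectation does not factor in one line. The care is to organize the induction over the refinement level so that the products appearing in $f(v,v')$ range over pairs of \emph{distinct} nodes matched by a bijection of neighborhoods --- this is exactly where the independent $k_{\text{A}}$-hash family property applies and where the definition of $k_i^{k_{\text{A}}}$ (obtained by substituting $k_{\text{A}}$ for $k_\delta$ in the recursion) is tailored to reproduce $\mathbf{E}[k_i(v,v')]$.
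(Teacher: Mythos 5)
Your overall architecture --- unbiasedness of a single iteration, Hoeffding over $I$ independent iterations, then a union bound over the $|\mathcal{G}|^2$ pairs --- is exactly the paper's intended argument (its entire proof is ``argue analogously to the proof of \cref{spa}''), and the Hoeffding and union-bound parts are fine. The problem is the expectation step, which you correctly single out as the main obstacle but do not actually overcome. The identity $\mathbf{E}\bigl[\prod_j k_\delta(h_{u_j}(a(u_j)),h_{u'_j}(a(u'_j)))\bigr]=\prod_j k_{\text{A}}(a(u_j),a(u'_j))$ requires every hash function to occur in at most one factor. Arranging the products inside $f(v,v')$ over distinct pairs matched by a bijection only secures this \emph{within} a single refinement level. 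Across levels it fails: $k_i(v,v')=k_{i-1}(v,v')\cdot f(v,v')$, and $f(v,v')$ contains $k_{i-1}(w,w')$ for $w\in N(v)$, whose own expansion contains $k_{i-2}(v,v')$ again because $v\in N(w)$. Hence the Dirac factor $k_\delta(h_v(a(v)),h_{v'}(a(v')))$ appears with multiplicity greater than one in the fully expanded product. For Dirac kernels this is invisible ($k_\delta^m=k_\delta$), so the expectation collapses each repeated factor to a single $k_{\text{A}}$; but the target $k_i^{k_{\text{A}}}$, defined by literal substitution of $k_{\text{A}}$ for $k_\delta$, raises it to the power of its multiplicity. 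Concretely, take $G$ and $H$ each a single edge, $\{v,w\}$ and $\{v',w'\}$, and $i=2$: one computes $k_2(v,v')=k_0(v,v')^2k_0(w,w')^2=k_0(v,v')\,k_0(w,w')$, so $\mathbf{E}[k_2(v,v')]=k_{\text{A}}(a(v),a(v'))\,k_{\text{A}}(a(w),a(w'))$, whereas $k_2^{k_{\text{A}}}(v,v')=k_{\text{A}}(a(v),a(v'))^2\,k_{\text{A}}(a(w),a(w'))^2$. So the induction hypothesis $\mathbf{E}[k_i(v,v')]=k_i^{k_{\text{A}}}(v,v')$ is false for $i\ge 2$.

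To be fair, this gap is inherited from the paper itself: its one-sentence proof asserts that $k^h_{\text{Imp-WL}}$ is ``a sum of products and sums of Dirac kernels'' and appeals to the two-factor identity from \cref{spa}, which suffices only when all factors involve pairwise distinct nodes, i.e., for $h\le 1$. A repaired statement would either restrict to $h\le 1$ or redefine $k^{h,k_{\text{A}}}_{\text{Imp-WL}}$ as the actual expectation, i.e., with repeated Dirac factors deduplicated before substituting $k_{\text{A}}$; your Hoeffding and union-bound steps would then go through unchanged.
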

\begin{proof}
Since $k^h_{\text{Imp-WL}}$ is written as a sum of products and sums of Dirac kernels, we can again use the property of $k$-independent hash functions and argue analogously to the proof of~\cref{spa}.
\end{proof}

\section{Experimental Evaluation}\label{ex}
	Our intention here is to investigate the benefits of hash graph kernels compared to the state-of-the-art. More precisely, we address the following questions:
	
	\begin{enumerate}
		\item[\textbf{Q1}] How do hash graph kernels compare to state-of-the-art graph kernels for attributed graphs in terms of classification accuracy and running time?
		\item[\textbf{Q2}] How does the choice of the discrete base kernel influence the classification accuracy?
		\item[\textbf{Q3}] Does the number of iterations influence the classification accuracy of hash graph kernels in practice?
	\end{enumerate}
	
	\subsection{Data Sets and Graph Kernels} We used the following data sets to evaluate and compare hash graph kernels: \textsc{Enzymes}~\cite{Borgwardt2005a,Fer+2013}, \textsc{Frankenstein}~\cite{Ors+2015}, \textsc{Proteins}~\cite{Borgwardt2005a,Fer+2013}, \textsc{SyntheticNew}~\cite{Fer+2013}, and \textsc{Synthie}.\footnote{Due to space constraints we refer to \url{https://ls11-www.cs.uni-dortmund.de/staff/morris/graphkerneldatasets} for descriptions, references, and statistics.}
   The data set \textsc{Synthie} consists of 400 graphs, subdivided into four classes, with 15 real-valued node attributes. It was generated as follows: First, we generated two Erd\H{o}s-Rényi graphs $G_1$ and $G_2$ on $n$ vertices with edge probability $p=0.2$. From each we generated a seed set $\mathcal{S}_i$ for $i$ in $\{1,2\}$ of 200 graphs by randomly adding or deleting 25\% of the edges of $G_i$. Connected graphs were obtained by randomly sampling 10 seeds and randomly adding edges between them. We generated the class $\mathcal{C}_1$ of 200 graphs, choosing seeds with probability 0.8 from $\mathcal{S}_1$ and 0.2 from $\mathcal{S}_2$ and the class $\mathcal{C}_2$ with interchanged probabilities. Finally, we generated a set of real-valued vectors of dimension 15, subdivided into classes $\mathcal{A}$ and $\mathcal{B}$, following the approach of~\cite{Guy+2003}. 
   We then subdivided $\mathcal{C}_i$ into two classes $\mathcal{C}^A_i$ and $\mathcal{C}^B_i$ by drawing a random attribute for each node. For the class $\mathcal{C}^A_i$ ($\mathcal{C}^B_i$), a node stemming from  a seed in $\mathcal{S}_1$ ($\mathcal{S}_2$) was annotated by an attribute from $\mathcal{A}$, and from $\mathcal{B}$ otherwise.

	We implemented hash graph kernels with the explicit shor\-test-path graph kernel (\textsc{HGK-SP}) and the Weisfeiler-Leh\-man subtree kernel (\textsc{HGK-WL}) acting as discrete base kernels in \emph{Python}.\footnote{The source code can be obtained from \url{https://ls11-www.cs.uni-dortmund.de/people/morris/hashgraphkernel.zip}.}
	We compare our kernels to the GraphHopper kernel (\textsc{GH})~\cite{Fer+2013}, an instance of the graph invariant kernels (\textsc{GI})~\cite{Ors+2015}, and the propagation kernel from~\cite{Neu+2015} which support continuous attributes (\textsc{P2K}). Additionally, we compare our kernel to the Weisfeiler-Lehman subtree kernel and the explicit shortest-path kernel (\textsc{SP}), which only take discrete label information into account, to exemplify the usefulness of using continuous attributes. Since the \textsc{Frankenstein}, \textsc{SyntheticNew}, and \textsc{Synthie} data set do not have discrete labels, we used node degrees as labels instead.
	For \textsc{GI} we used the original \emph{Python} implementation provided by the author of~\cite{Ors+2015}.  The variants of the hash graph kernel are computed on a single core only. For \textsc{GH} and \textsc{P2K} we used the original \emph{Matlab} implementation provided by the authors of~\cite{Fer+2013} and~\cite{Neu+2015}, respectively.
		
	\subsection{Experimental Protocol}
	
	For each kernel, we computed the normalized gram matrix. For explicit kernels we computed the gram matrix via the linear kernel. Note that the running times of the hash graph kernels could be further reduced by employing linear kernel methods. 
	
	We computed the classification accuracies using the \emph{C-SVM} implementation of \emph{LIBSVM}~\cite{Cha+11}, using 10-fold cross validation. The $C$-parameter was selected from $\{10^{-3}, 10^{-2}, \dots, 10^{2},$ $10^{3}\}$ by 10-fold cross validation on the training folds. We repeated each 10-fold cross validation ten times with different random folds and report average accuracies and standard deviations. Since the hash graph kernels and \textsc{P2K} are randomized algorithms we computed each gram matrix ten times and report average classification accuracies and running times. We report running times for \textsc{WL}, \textsc{HGK-WL}, and \textsc{P2K} with $5$ refinement steps.
	
	We fixed the number of iterations of the hash graph kernels for all but  the \textsc{Synthie} data set to 20, since this was sufficient to obtain state-of-the-art classification accuracies. For the \textsc{Synthie} data set we set the number of iterations to 100, which indicates that this data set is harder to classify. The number of refinement/propagation steps for \textsc{WL}, \textsc{HGK-WL}, and \textsc{P2K} was selected from $\{0,\dots,4\}$ using 10-fold cross validation on the training folds only. For the hash graph kernels we centered the attributes dimensionwise to the mean, scaled to unit variance, and used $2$-stable LSH~\cite{Dat+2004} as hash functions to hash attributes to discrete labels. For the \textsc{Enzymes}, the \textsc{Proteins}, \textsc{SyntheticNew}, and \textsc{Synthie} data set we set the interval length to $1$, see \cite{Dat+2004}. Due to the high dimensional sparse attributes of the \textsc{Frankenstein} data set we set the interval length to $100$. 
	For each hash graph kernel we report classification accuracy and running time with and without taking discrete labels into account. For the \textsc{HGK-WL} we propagated label and hashed attribute information separately. In order to speed up the computation we used the same LSH hash function for all attributes in one iteration.
	
	For the graph invariant kernel we used the \textsc{LWL$_{\text{V}}$} variant, which has been reported to perform overall well on all data sets~\cite{Ors+2015}. The implementation is using parallelization to speed up the computation and we set the number of parallel processes to eight. For \textsc{GH} and \textsc{GI} we used the Gaussian RBF kernel to compare node attributes. For all the data sets except \textsc{Frankenstein}, we set the parameter $\gamma$ of the RBF kernel  to 1/(Dimension of attribute vector), see~\cite{Fer+2013, Ors+2015}. For \textsc{Frankenstein}, we set $\gamma = 0.0073$~\cite{Ors+2015}. 
	
	Moreover, in order to study the influence of the number of iterations of the hash graph kernels, we computed classification accuracies and running times of hash kernels with 1 to 50 iterations on the \textsc{Enzymes} data set. Running times were averaged over ten independent runs.
	
	All experiments were conducted on a workstation with an \emph{Intel Core i7-3770@3.40GHz}  and 16 GB of RAM running \emph{Ubuntu 14.04 LTS} with \emph{Python 2.7.6} and \emph{Matlab R2015b}.
	
	\subsection{Results and Discussion}	
		
			\begin{table*}\centering
						\caption{Running times in seconds (Number of iterations for \textsc{HGK-WL} and \textsc{HGK-SP}: 20 (100 for \textsc{Synthie}), Number of refinement steps of \textsc{WL}, \textsc{HGK-WL}, and \textsc{PK}: 5, $^*$--- Kernel uses discrete labels only, $^\dagger$--- Matlab code, $^\ddagger$--- Code is executed in parallel using eight processes), \textsc{OOM}--- Out of Memory.}
						\ra{0.8}
				\resizebox{0.72\textwidth}{!}{
							\begin{tabular}{@{}lccccccccccc@{}}\toprule
								\multirow{3}{*}{\textbf{Graph Kernel}}&\multicolumn{10}{c}{\textbf{Data Set}}\\\cmidrule{2-11}
								& \multicolumn{2}{c}{\textsc{Enzymes}}  & \multicolumn{2}{c}{\textsc{Frankenstein}}  & \multicolumn{2}{c}{\textsc{Proteins}}  &  \multicolumn{2}{c}{\textsc{SyntheticNew}} & \multicolumn{2}{c}{\textsc{Synthie}}  \\
								& {Cont.} & {Label+Cont.} 	& {Cont.} & {Label+Cont.}	& {Cont.} & {Label+Cont.} 	& {Cont.} & {Label+Cont.} 	& {Cont.} & {Label+Cont.}  
								\\\midrule
								\textsc{WL$^*$} &  \multicolumn{2}{c}{1.30}  &  \multicolumn{2}{c}{25.05} & \multicolumn{2}{c}{4.06}   &  \multicolumn{2}{c}{0.73} &  \multicolumn{2}{c}{1.02}  \\ 
								\textsc{SP$^*$} & \multicolumn{2}{c}{1.46} &  \multicolumn{2}{c}{22.87
								} & \multicolumn{2}{c}{5.86}  &  \multicolumn{2}{c}{3.26} &  \multicolumn{2}{c}{3.69} \\
								\textsc{HGK-SP} & 27.91 & 43.32  & \bf 165.9 & 197.82 & 89.13
								 & 107.09 & 60.74 & 80.63 & 428.13 & 714.38 \\
								\textsc{HGK-WL} & 25.10 & \bf 32.06 & 307.10 & 497.69 & 60.00 & \bf 82.41   &  15.17 & 22.50 & 123.59 & 168.04 \\	
								\textsc{GH$^\dagger$} & -- &  365.82 & -- &  16329.00 & --  & 3396.20  &-- &  275.26   & -- & 348.18 \\
								\textsc{GI$^\ddagger$} & -- & 1748.82  &--  & 26717.25 & -- & 7905.23 & -- & 3814.46  & -- & 5522.96 \\
								\textsc{P2K$^\dagger$} & -- &  43.77  & --  & \textsc{OOM} & -- & 208.09 & -- & \bf 15.12 & -- &\bf  45.34 \\
								\bottomrule
							\end{tabular}}
							\label{run}
					\end{table*}
			\begin{table*}\centering
				
					\caption{Classification accuracies in percent and standard deviations (Number of iterations for \textsc{HGK-WL} and \textsc{HGK-SP}: 20 (100 for \textsc{Synthie}), $^*$--- Kernel uses discrete labels only), \textsc{OOM}--- Out of Memory.}
					\ra{0.8}
			\resizebox{1.0\textwidth}{!}{
		
						\begin{tabular}{@{}lcccccccccc@{}}\toprule
							\multirow{3}{*}{\textbf{Graph Kernel}}&\multicolumn{9}{c}{\textbf{Data Set}}\\\cmidrule{2-11}
							& \multicolumn{2}{c}{\textsc{Enzymes}}  & \multicolumn{2}{c}{\textsc{Frankenstein}}  & \multicolumn{2}{c}{\textsc{Proteins}}  &  \multicolumn{2}{c}{\textsc{SyntheticNew}} & \multicolumn{2}{c}{\textsc{Synthie}} 
							 \\
							& {Cont.} & {Label+Cont.} 	& {Cont.} & {Label+Cont.} 	& {Cont.} & {Label+Cont.} 	& {Cont.} & {Label+Cont.}  & {Cont.} & {Label+Cont.} 
							\\\midrule
							\textsc{WL}$^*$ &  \multicolumn{2}{c}{53.97	(\scriptsize $\pm 1.34)$}  &  \multicolumn{2}{c}{ 73.53
												  (\scriptsize $\pm 0.33 $)} 
							&  \multicolumn{2}{c}{75.02 (\scriptsize $\pm 0.58$)}  &  \multicolumn{2}{c}{\bf 98.57  (\scriptsize $\pm 0.30)$} & \multicolumn{2}{c}{	53.60 (\scriptsize $\pm 0.81)$}  \\ 
							\textsc{SP}$^*$ & \multicolumn{2}{c}{42.88	
							 (\scriptsize $\pm 1.04$)} &  \multicolumn{2}{c}{69.51(\scriptsize $\pm 0.35)$}  
							&  \multicolumn{2}{c}{ 75.71 (\scriptsize $\pm 0.34 $)} 
							& \multicolumn{2}{c}{ 83.30(\scriptsize $\pm1.35)$} &
							 \multicolumn{2}{c}{	53.78 (\scriptsize $\pm 0.62)$} 
							\\
							\textsc{HGK-SP} & 66.73 (\scriptsize $\pm 0.91$) & \bf 71.30  (\scriptsize $\pm 0.86$)  & 65.84 
							  (\scriptsize $\pm 0.32 $) & 70.06 (\scriptsize $\pm 0.32$) &  75.14 (\scriptsize $\pm 0.47 $) & \bf 77.47 (\scriptsize $\pm 0.43$) & 80.55 (\scriptsize $\pm 1.29 $)  & 96.46  \scriptsize $\pm 0.61$) &   86.27 (\scriptsize $\pm 0.72$)& 94.34 (\scriptsize $\pm 0.54 $) \\
							\textsc{HGK-WL} & 63.94 (\scriptsize $\pm 1.11$)&  67.63 (\scriptsize $\pm 0.95 $)  & 73.16 (\scriptsize $\pm 0.34 $) & 73.62
												  (\scriptsize $\pm 0.38$) & 74.88 (\scriptsize $\pm 0.64$) & 76.70 (\scriptsize $\pm 0.41 $) & 97.57 (\scriptsize $\pm 0.42$) & \bf 98.84 (\scriptsize $\pm 0.29$) & 80.25  (\scriptsize $\pm 1.37 $)& \bf  96.75 (\scriptsize $\pm 0.51 $) \\
							\textsc{GH} & -- & 68.80 (\scriptsize $\pm	0.96$) & -- & 68.48	 (\scriptsize $\pm 0.26$) &-- & 72.26 (\scriptsize $\pm 0.34$)  & -- & 85.10 (\scriptsize $\pm 1.04$) & -- & 73.18 (\scriptsize $\pm 0.77$) \\
							\textsc{GI} & -- & \bf 71.70   (\scriptsize $\pm 0.79$)  & -- & \bf 76.31  (\scriptsize $\pm 0.33 $) & -- & 76.88 (\scriptsize $\pm 0.47$)  & -- & 83.07 (\scriptsize $\pm 1.10$)  & -- & 95.75 (\scriptsize $\pm 0.50$) \\
								\textsc{P2K} & -- & 69.22 (\scriptsize $\pm 0.34$)  & -- & \textsc{OOM}  & -- &  73.45  (\scriptsize $\pm 0.48 $)  & -- & 91.70  (\scriptsize $\pm 0.86$)  & -- &  50.15  (\scriptsize $\pm 1.92$)  \\
							\bottomrule
		
						\end{tabular} %
					}
					\label{er}
			\end{table*}
	In the following we answer questions Q1--Q3.
	\begin{itemize}
		\item[\textbf{A1}] 	
		The running times and the classification accuracies are depicted in~\cref{run} and~\cref{er}, respectively.
		
		In terms of classification accuracies \textsc{HGK-WL} achieves state-of-the-art results on the \textsc{Proteins} and the \textsc{SyntheticNew} data set. Notice that the \textsc{WL} kernel, without using attribute information, achieves the same classification accuracy on \textsc{SyntheticNew}. This indicates that on this data set the attributes are only of marginal relevance for classification. A different result is observed for the other data sets. On the \textsc{Synthie} data set \textsc{HGK-WL} achieves the overall best accuracy and is more than 20\% better than \textsc{GH} and 40\% better than \textsc{P2K}. The kernel almost achieves state-of-the art classification accuracy on the \textsc{Frankenstein} data set. Notice that the $\gamma$ parameter of the RBF kernel used in \textsc{GI} and \textsc{GH} was finely tuned.
		
		 \textsc{HGK-SP} achieves state-of-the-art classification accuracy on the \textsc{Enzymes} and \textsc{Proteins} data set and compares favorably on the \textsc{SyntheticNew} data set. On the \textsc{Frankenstein} data set, we observed better classification accuracy than \textsc{GH}. Moreover, it performs also well on the \textsc{Synthie} data set.
		
		In terms of running times both instances of the hash graph kernel framework perform very well. On all data sets \textsc{HGK-WL} obtains running times that are several orders of magnitude faster than \textsc{GH} and \textsc{GI}.

		\item[\textbf{A2}] As \cref{er} shows, the choice of the discrete base kernel has major influence on the classification accuracy for some data sets. On the \textsc{Enzymes} data sets $\textsc{HGK-SP}$ performs very favorably, while $\textsc{HGK-WL}$ achieves higher classification accuracies on the \textsc{Frankenstein} data set. On the \textsc{Proteins} and the \textsc{SyntheticNew} data sets both hash graph kernels achieve similar results. \text{HGK-WL} performs slightly better on the \textsc{Synthie} data set.
		
		\item[\textbf{A3}] \cref{itvsac} illustrates the influence of the number iterations on \textsc{HGK-SP} and \textsc{HGK-WL} on the \textsc{Enzymes} data set. Both plots show that a higher number of iterations leads to better classification accuracies while the running time grows linearly. In case of the \textsc{HGK-SP}, the classification accuracy on the \textsc{Enzymes} data set improves by more than 12\% when using 20 instead of 1 iterations. The improvement on the \textsc{Enzymes} data set is even more substantial for \textsc{HGK-WL}: the classification accuracy improves by more than 16\%. At about 30 and 40 iterations for the \textsc{HGK-SP} and \textsc{HGK-WL}, respectively, the algorithms reach a point of saturation.
	\end{itemize}

							\begin{figure}[h]
									\subfigure[Shortest-path kernel]{
																\includegraphics[width=0.23\textwidth]{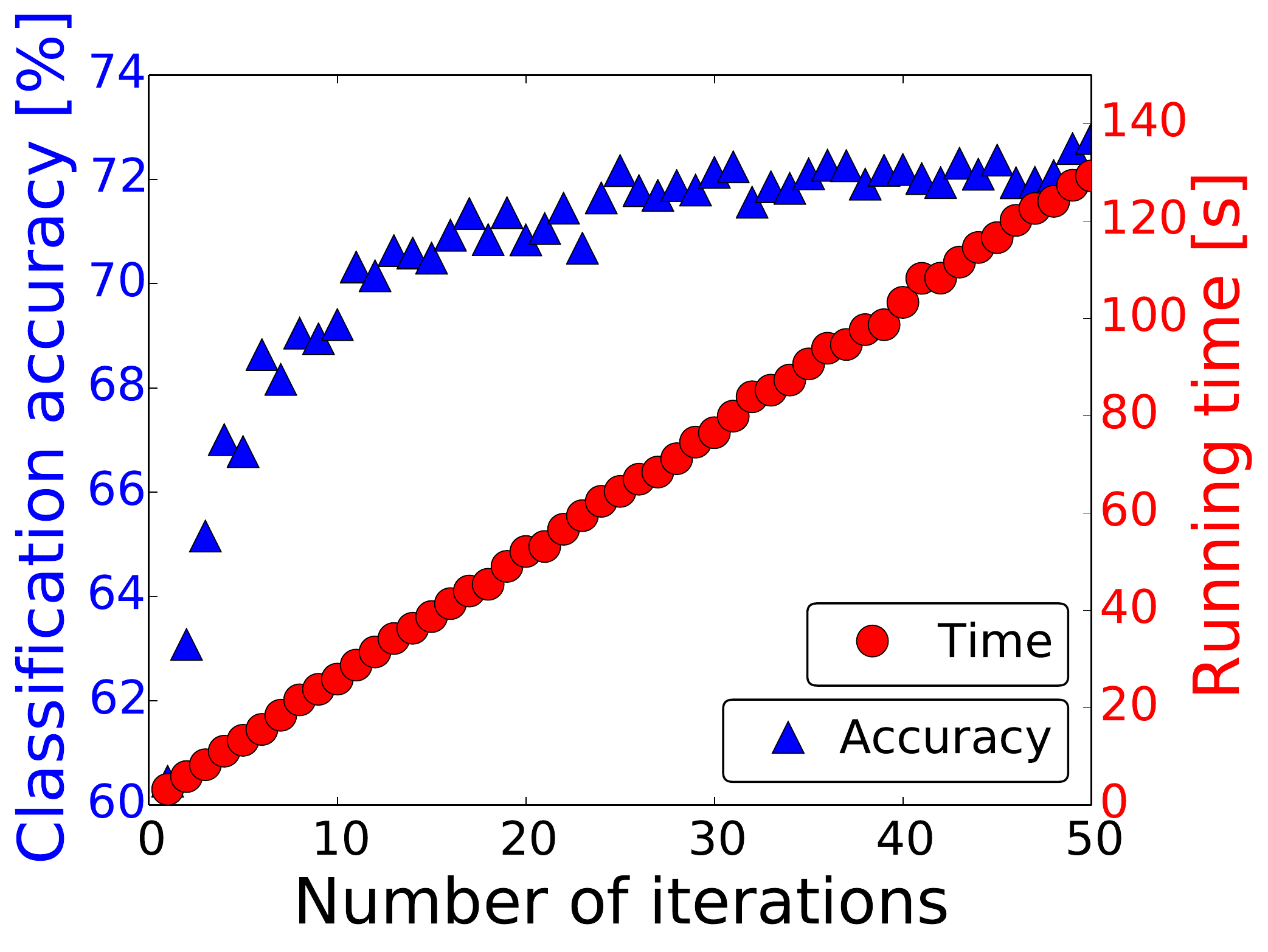}
											}\hfill
											\subfigure[Weisfeiler-Lehman subtree kernel]{
												\includegraphics[width=0.23\textwidth]{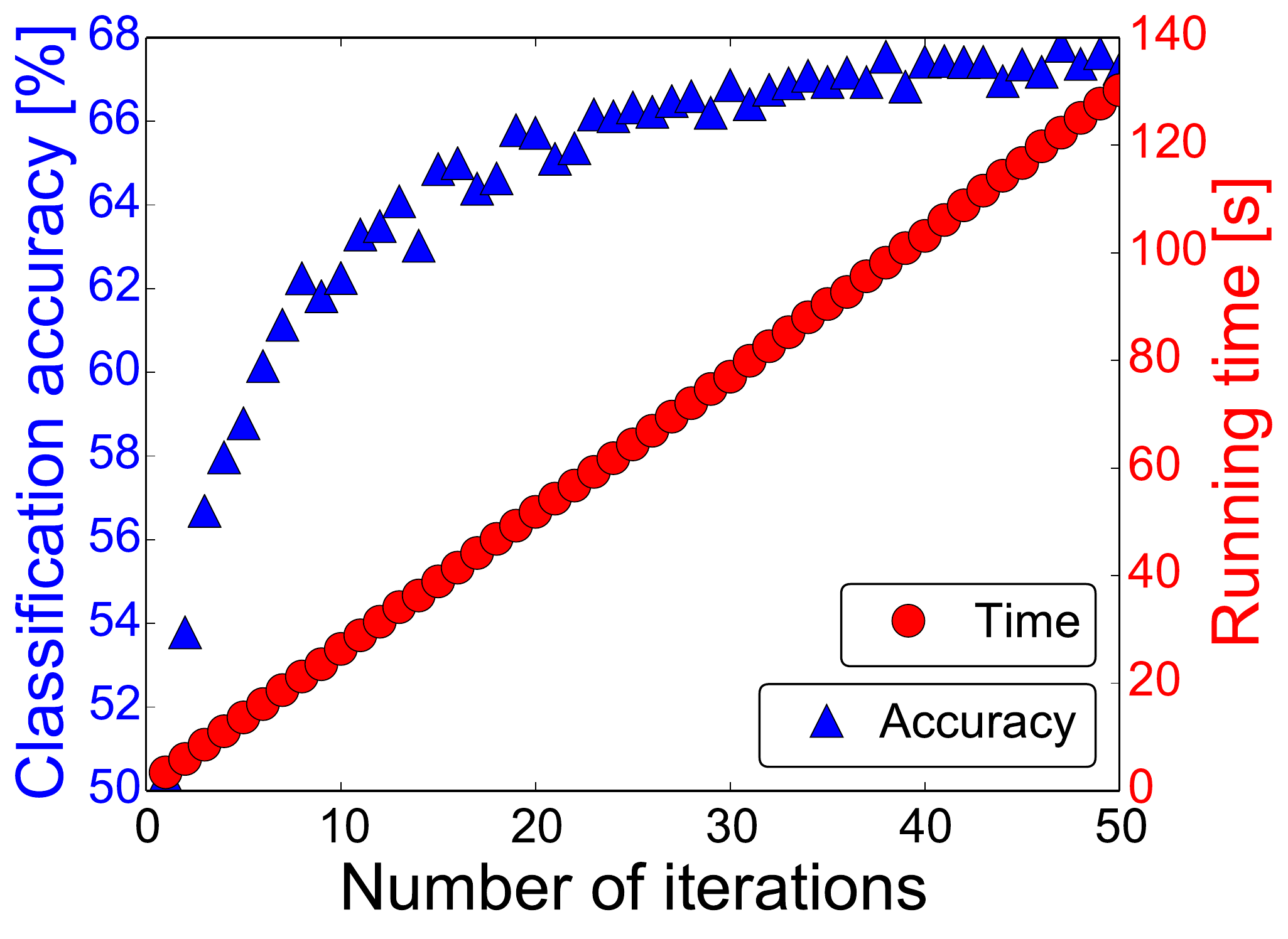}
											}
											
								\caption{Influence of the number of iterations on the classification accuracy for \textsc{HGK-SP} and \textsc{HGK-WL} on the \text{\textsc{Enzymes}} data set.}\label{itvsac}
							\end{figure}

		\section{Conclusion and Future Work}
		
		We have introduced the hash graph kernel framework which allows applying the 
		various existing scalable and well-engi\-neered kernels for graphs with discrete 
		labels to graphs with continuous attributes. 
		The derived kernels outperform other kernels tailored to attributed graphs 
		in terms of running time without sacrificing classification accuracy.

		Moreover, we showed that the hash graph kernel framework approximates implicit variants of the shortest-path and the Weisfeiler-Lehman subtree kernel with an arbitrary small error.

		\section*{Acknowledgement}
		
		This work was supported by the German Science Foundation (DFG) within the Collaborative Research Center SFB 876 ``Providing Information by Resource-Constrained Data Analysis'', project A6 ``Resource-efficient Graph Mining''. We thank Aasa Feragen, Marion Neumann, and Franceso Orsini for providing us with data sets and source code. 

\printbibliography
\end{document}